\theoremstyle{plain}
\newtheorem{theorem}{Theorem}[section]
\theoremstyle{definition}
\newtheorem{definition}[theorem]{Definition}
\newtheorem{construction}{Construction} 
\theoremstyle{remark}
\newcommand{\BibTeX}{B\kern-.05em{\sc i\kern-.025em b}\kern-.08em\TeX}
\newcommand{\eg}{e.g.,\xspace}
\newcommand{\ie}{i.e.,\xspace}
\DeclareMathOperator*{\argmax}{arg\,max}
\newcommand{\StatexIndent}[1][3]{%
  \setlength\@tempdima{\algorithmicindent}%
  \Statex\hskip\dimexpr#1\@tempdima\relax}
\newcommand{\A}{\mathcal{A}}
\newcommand{\D}{\mathcal{D}}
\newcommand{\F}{\mathcal{F}}
\newcommand{\E}{\mathbb{E}}
\newcommand{\G}{\mathcal{G}}
\newcommand{\M}{\mathcal{M}}
\renewcommand{\P}{\mathcal{P}}
\newcommand{\R}{\mathbb{R}}
\renewcommand{\S}{\mathcal{S}}
\newcommand{\T}{\mathcal{T}}
\newcommand{\SEQ}{\pi}
\newcommand{\POP}{\ensuremath{\overline{\pi}}\xspace}
\newcommand{\MaxPOP}{\ensuremath{\overline{\Pi}}\xspace}
\newcommand{\policy}{\rho}
\newcommand{\RM}{{\cal R}}
\newcommand{\MaxRM}{\ensuremath{{\cal R}_{\overline{\Pi}}}\xspace}
\begin{document}


\begin{frontmatter}


\paperid{414} 


\title{Maximally Permissive Reward Machines}


\author[A]{\fnms{Giovanni}~\snm{Varricchione}\thanks{Corresponding Author. Email: g.varricchione@uu.nl}}
\author[B,A]{\fnms{Natasha}~\snm{Alechina}}
\author[A]{\fnms{Mehdi}~\snm{Dastani}}
\author[C,A]{\fnms{Brian}~\snm{Logan}}

\address[A]{Utrecht University}
\address[B]{Open University}
\address[C]{University of Aberdeen}


\begin{abstract}
Reward machines allow the definition of rewards for temporally extended tasks and behaviors.
Specifying ``informative'' reward machines can be challenging. One way to address this is to generate reward machines from a high-level abstract description of the learning environment, using techniques such as AI planning. However, previous planning-based approaches generate a reward machine based on a single (sequential or partial-order) plan, and do not allow maximum flexibility to the learning agent. In this paper we propose a new approach to synthesising reward machines which is based on the set of partial order plans for a goal. We prove that learning using such ``\emph{maximally permissive}'' reward machines results in higher rewards than learning using RMs based on a single plan. We present experimental results which support our theoretical claims by showing that our approach obtains higher rewards than the single-plan approach in practice.
\end{abstract}

\end{frontmatter}


\section{Introduction}

Reward machines were introduced in \cite{Toro-Icarte//:18} as a way of defining temporally extended (\ie non-Markovian relative to the environment) tasks and behaviors. A \emph{reward machine} (RM) is a Mealy machine where states represent abstract `steps' or `phases' in a task, and transitions correspond to observations of \emph{high-level events} in the environment indicating that an abstract step/phase in the task has (or has not) been completed \cite{Camacho//:19,Toro-Icarte//:22}.
The RM-based algorithm proposed in \cite{Camacho//:19} has been shown to out-perform state-of-the-art RL algorithms, especially in tasks involving temporally extended behaviours. However, while learning with a reward machine is guaranteed to converge to an optimal policy \emph{with respect to the reward machine}, in general RMs provide no guarantees that the resulting policy is optimal \emph{with respect to the task} encoded by the reward machine. 
For example, a reward machine may specify that event $a$ should be observed before event $b$, while in some environment states, it may be more efficient to achieve $b$ before $a$. In general, for an RM-based policy to be optimal with respect to a task, the reward machine for the task must encode all possible ways the task can be achieved.

Another problem with reward machines is how to to generate them. While a declarative specification in terms of abstract steps or phases in a task is often easier to write than a conventional reward function, specifying a reward machine for a non-trivial task is challenging and prone to errors.
Reward machines can be computed from task specifications expressed in a range of goal and property specification languages, including LTL and LTL$_f$, in a straightforward way \cite{Camacho//:19}. However, reward machines generated from an abstract temporal formula may not expose significant task structure. Writing more ``informative'' specifications can be challenging, and, moreover, may inadvertently over-prescribe the order in which the steps are performed. 
One way to address this problem, is to generate a reward machine from a high-level abstract description of the learning environment, using techniques such as AI planning \cite{Illanes//:19,Illanes//:20}, or (in a multi-agent setting) ATL model checking \cite{Varricchione//:23}. 
For example, Illanes et al.\ \cite{Illanes//:19} consider a high-level model, in the form of a planning domain, of the environment in which the agent acts. They show how planning techniques can be used to synthesise a plan for a task, which is then used to generate a reward machine for that task. The reward machine is used to train a meta-controller for a hierarchical RL agent. The controller chooses which option (corresponding to an abstract action in the planning domain) to execute next. Their results indicate that an agent trained using a plan-based reward machine outperforms (is more sample efficient than) a standard HRL agent. They also show that reward machines based on partial-order plans outperform reward machines generated from sequential plans, arguing that this is because partial-order plans allow more ways of completing a task.

While the results presented by Illanes et al. are encouraging, their approach does not allow maximum flexibility to the agent, and thus cannot ensure learning an optimal policy for the task. The reward machine they generate is based on a single partial-order plan. In many cases, a goal may be achieved by different plans and each plan might be more appropriate in different circumstances, e.g., depending on the agent's location, or the resources available. 

In this paper we propose a new approach to synthesising reward machines which is based on the \emph{set of partial-order plans for a goal}. We present an algorithm which computes the set of partial-order plans for a planning task, and give a construction for synthesising a \emph{maximally permissive} reward machine (MPRM) from a set of partial-order plans. We prove that the expected discounted future reward of optimal policies learned using an MPRM is greater than or equal to that obtained from optimal policies learned using a RM synthesised from any single partial-order plan. We introduce a notion of the \emph{adequacy} of planning domain abstractions, which intuitively characterises when a planning domain captures all the relevant features of an MDP, and prove that the expected reward of an optimal policy learned using an MPRM synthesised from a \emph{goal-adequate} planning domain is the same as that of an optimal policy for the underlying MDP.
Finally, we evaluate MPRMs using three tasks in the \textsc{CraftWorld} environment \cite{Andreas//:17} used in \cite{Illanes//:19,Illanes//:20}, and show that the agent obtains higher reward than with RMs based either on a single partial-order plan or on a single sequential plan.

\section{Preliminaries}

In this section, we provide formal preliminaries for both reinforcement learning and planning. 

\subsection{Reinforcement Learning}

In RL the model in which agents act and learn is generally assumed to be a \emph{Markov Decision Process} (MDP) $M = \langle S, A, r, p, \gamma \rangle$, where $S$ is the set of states, $A$ is the set of actions, $r: S \times A \times S \to \R$ is the reward function, $p: S \times A \to \Delta(S)$ is the transition function, and $\gamma \in [0, 1]$ is the discount factor. It is assumed that the agent does not have access to the model in which it acts, i.e., $r$ and $p$ are hidden to it. The agent's goal in RL is to learn a \emph{policy} $\policy: S \to \Delta(A)$, i.e., a map from each state of the MDP to a probability distribution over the set of actions. In particular, we are mostly interested in so-called ``\emph{optimal policies}'', i.e., policies that maximise the expected discounted future reward from any state $s \in S$:
        
        \[ 
            \policy^* = \argmax_{\policy} \sum_{s \in S} v_\policy(s)
        \]

        where $v_\policy(s)$ is the ``\emph{value function}'', \ie the expected discounted future reward obtained from state $s$ by following policy $\policy$:

        \[
            v_\policy(s) = \E_\policy \left[ \sum_{t = 0}^\infty \gamma^t r_t \mid s_0 = s \right]
        \]

        where $r_t$ is the reward obtained at timestep $t$.
        
        As the MDP's dynamics and reward are hidden, the agent is supposed to learn a policy by trial and error. This is achieved by the agent taking an ``exploratory'' action $a$ in a state $s$, and observing which state $s'$ (sampled from $p(s, a)$) is reached and the reward $r' = r(s, a, s')$ that is obtained. By collecting these experiences $(s, a, s', r') \in S \times A \times S \times \R$, or ``\emph{samples}'', the agent can learn a policy $\policy$ via RL algorithms, such as \emph{Q-learning} \cite{Watkins//:92}.

    \subsection{Labelled MDPs}

    As in this work we assume the presence of planning domains and reward machines, we also assume that we are given a so-called ``\emph{labelling function}'' $L$. This function will be the link between the low-level MDP, in which agents learn how to act, and the high-level planning domain and reward machine, which describe how agents can achieve a task using high-level symbols and actions.

    \begin{definition}[Labelled MDP]
        Let $\P$ be a set of propositional symbols. Then, a \emph{labelled MDP} is a tuple $\M = \langle S, A, r, p, \gamma, L \rangle$, where $S, a, r, p$ and $\gamma$ are as in an MDP, and $L: S \to 2^\P$ is the labelling function, mapping each state of the MDP to a set of propositional symbols.
    \end{definition}

\subsection{Reward Machines}
    Reward machines \cite{Toro-Icarte//:22} are a tool recently introduced in the RL literature to define non-Markovian reward functions via finite state automata. Let $\M = \langle S, A, r, p, \gamma, L\rangle$ be a labelled MDP for some set of propositional symbols $\P$.

    \begin{definition}[Reward Machine]
        A reward machine (RM) is a tuple $\RM = \langle U, u_0, \Sigma, \delta_u, \delta_r \rangle$, where $U$ is the set of states of the RM, $u_0$ is the initial state, $\Sigma \subseteq 2^\P$ is the input alphabet, $\delta_u: U \times \Sigma \to U$ is the state transition function, and $\delta_r: U \times U \to \R$ is the reward transition function.
    \end{definition}

    When using RMs, training is usually done over the so-called ``\emph{product}'' between the labelled MDP and the RM, also known as a ``\emph{Markov Decision Process with a Reward Machine}'' (MDPRM) \cite{Toro-Icarte//:22}. 

    \begin{definition}[MDPRM]
        A \emph{Markov Decision Process with a Reward Machine} (MDPRM) is a tuple $\M = \langle S, A, p, \gamma, L, U, u_0, \delta_u, \delta_r \rangle$, where $S, A, p, \gamma, L$ are as in the definition of a labelled MDP, and $U, u_0, \Sigma, \delta_u$ and $\delta_r$ are as in the definition of a reward machine. 
    \end{definition}    

    At each timestep, the RM is in some state $u$. As the agent moves the MDP into state $s'$, the RM updates its internal state via the observation $L(s')$, \ie the new RM state is $u' = \delta_u(u, L(s'))$. Accordingly, the RM also outputs the reward $\delta_r(u, u')$, which is the reward the agent obtains. 
    As in ``vanilla'' MDPs, the agent learns a policy by taking exploratory actions and collecting rewards from the RM's reward function $\delta_r$. Thus, samples include also the states of the RM, i.e., each sample is a tuple $(s, u, a, s', u', r') \in S \times U \times \R \times S \times U$. For this reason, any RL algorithm that works with standard MDPs can also be used in MDPRMs. Moreover, algorithms exploiting access to the RM have also been proposed, e.g., CRM \cite{Toro-Icarte//:22}.

\subsection{Symbolic Planning}
\label{subsec:planning}

A \emph{planning domain} $\D = \langle \F, \A \rangle$, is a pair where $\F \subseteq \P$ is a set of \emph{fluents} (propositions), and $\A$ is a set of \emph{planning actions}. 
\emph{Planning states} are subsets $\S \subseteq \F$, where a proposition is in $\S$ if and only if it is true in $\S$. Actions $a \in \A$ are tuples $a = \langle \mathit{pre}^+, \mathit{pre}^-, \mathit{eff}^+, \mathit{eff}^- \rangle$ such that each element of $a$ is a subset of $\F$, $\mathit{pre}^+ \cap \mathit{pre}^- = \emptyset$ and $\mathit{eff}^+ \cap \mathit{eff}^- =\emptyset$. The ``$\mathit{pre}$'' sets are the sets of ``\emph{preconditions}'', whereas the ``$\mathit{eff}$'' are the sets of ``\emph{effects}'', or ``\emph{postconditions}''. $\mathit{pre}^+$ are the propositions that must be true to perform the action, whereas $\mathit{pre}^-$ those that must be false. Analogously, $\mathit{eff}^+$ are the propositions that are made true by the action, whereas $\mathit{eff}^-$ those that are made false. Thus, an action $a$ can be executed from a planning state $\S$ if and only if $\mathit{pre}^+ \subseteq \S$ and $\mathit{pre}^- \cap \S = \emptyset$.  Executing action $a$ in state $\S$ results in the new state $\S' = \left(\S \setminus \mathit{eff}^-\right) \cup \mathit{eff}^+$.
Given an MDP $\M$ and a planning domain $\D$, we assume that $A \cap \A = \emptyset$, \ie the planning actions are not the same as the actions the agent can perform in the MDP. Intuitively, the planning actions can be seen as \emph{high-level} or abstract actions which correspond to sequences of actions in the MDP. For example, in a Minecraft-like scenario where the agent can move up, down, left, and right on a grid, a planning action might be ``\emph{get wood}'' corresponding to a sequence of movement actions ending in a cell containing wood. 

As an example of a planning domain, consider the \textsc{CraftWorld} environment \cite{Andreas//:17} in which an agent moves in a grid and has to gather resources which can then be used to produce items.\footnote{\textsc{CraftWorld} is based on the popular video game Minecraft, and was used as a test domain in \cite{Illanes//:19,Illanes//:20}.} For example, the agent can build a bridge in order to reach the gold ore. We assume the agent  can build two different types of bridge: an iron bridge or a rope bridge. The iron bridge requires gathering wood and iron and then processing them  in a factory. The rope bridge requires gathering grass and wood and processing them in a toolshed. The corresponding planning domain $\D$ can be formalised as:
\begin{align*}
        \langle\F = \{ & \texttt{has-wood}, \texttt{has-grass}, \\&\texttt{has-iron}, \texttt{has-bridge} \},\\
        \A = \{ & \texttt{get-wood}, \texttt{get-grass}, \texttt{get-iron}, \\
                & \texttt{use-factory}, \texttt{use-toolshed} \}\rangle
\end{align*}
The \texttt{get-x} actions have no preconditions, and only one positive postcondition, \ie that \texttt{has-x} is true. The \texttt{use-factory} action has the preconditions \texttt{has-wood} and \texttt{has-iron}, the positive postcondition \texttt{has-bridge}, and the negative postconditions, \texttt{has-wood} and \texttt{has-iron}, \ie the \texttt{use-factory} action makes a bridge, ``consuming'' the resources collected by the agent in the process. The \texttt{use-toolshed} action has \texttt{has-wood} and \texttt{has-grass} as preconditions, the positive postcondition \texttt{has-bridge}, and the negative postconditions, \texttt{has-wood} and \texttt{has-grass}.

A \emph{planning task} is a triple $\T = \langle \D, \S_I, \G \rangle$, where $\D$ is a planning domain, $\S_I$ is the initial planning state, and $\G = \langle \G^+, \G^- \rangle$ is a pair containing two subsets of $\F$ which are disjoint. Any planning state $\S$ such that $\G^+ \subseteq \S$ and $\S \cap \G^- = \emptyset$ is a \emph{goal state}. 
For example, the planning task to build a bridge is given by the domain $\D$ we have previously defined, the initial state $\S_I = \emptyset$, and the goal $\G = \langle \{\texttt{has-bridge}\}, \emptyset \rangle$.

A \emph{sequential plan} $\SEQ = [a_0, \dots, a_n]$ for a planning task $\T$ is a sequence of planning actions $a_i \in \A$ such that: (i) it is possible to execute them sequentially starting from $\S_I$, and (ii) by doing so, the planning domain reaches a goal state.  For example, the following sequential plan allows the agent to produce a rope bridge:
\[
[\texttt{get-wood}, \texttt{get-grass}, \texttt{use-toolshed}]
\]

A \emph{partial-order plan} (POP) $\POP  = \langle \A', \prec \rangle$ is a pair where $\A'$ is a multiset of actions from $\A$ and $\prec$ is a partial order over $\A'$ \cite{Chapman:87a,McAllester/Rosenblitt:91a}.
We write $a \prec a'$ to denote $(a, a') \in \prec$, meaning that action $a$ must be performed before action $a'$. 
For example, the following partial-order plan allows the agent to produce an iron bridge: 
\begin{align*}\overline{\pi}&{}_{\texttt{iron-bridge}} = \\
&\langle \{ \texttt{get-wood}, \texttt{get-iron}, \texttt{use-factory} \}, \\
&\ \,\{\texttt{get-wood} \prec \texttt{use-factory}, \\
&\ \ \texttt{get-iron} \prec \texttt{use-factory} \} \rangle
\end{align*}
Sequential plans are a special case of partial-order plans where $\prec$ is a total order. 
In general, a partial-order plan corresponds to a \emph{set} of sequential plans, \ie the set of all sequential plans that can be obtained by extending the partial order $\prec$ to a total order (referred to as a ``\emph{linearisation}'' of the partial-order plan).
Compared to sequential plans, partial-order plans allow the agent greater flexibility in choosing the order in which actions are executed. While a sequential plan constrains the agent to follow the total order of the plan, with a partial-order plan the agent can perform any action $a$, so long as all actions $a'$ such that $a' \prec a$ have already been executed. 

Typically, given a planning task, a partial-order planner, \eg \cite{Nguyen/Kambhampati:01a}, returns a single partial-order plan $\POP = \langle \A', \prec \rangle$. 
However, in general, a planning task can be achieved using multiple partial-order plans, \ie plans $\POP'$ where the actions in $\A'$ are ordered differently, or which use different multisets of actions.

\begin{definition}[Set of all partial-order plans]
The \emph{set of all partial-order plans for a planning task} $\langle \D, \S_I, \G \rangle$, $\MaxPOP$, is the set of plans $\langle \A', \prec \rangle$ where $\A' \subseteq \A$ and any linearisation $[a_0, \dots, a_n]$ of $\A'$ consistent with $\prec$ results in a goal state $\S$, \ie $\G^+ \subseteq \S$ and $\G^- \cap \S = \emptyset$. 
\end{definition}

It is straightforward to give an algorithm that returns the set of all partial-order plans $\MaxPOP$  for a  planning task, see Algorithm \ref{alg:max-pop}. We assume the following definitions.
$steps(\POP)$ is the multiset of actions in the plan $\POP$ and $ord(\POP)$ is the set of ordering constraints. 
In addition, the algorithm maintains a set $links(\POP)$ of \emph{causal links} of the form $(a',p,a)$ where $a'$ and $a$ are steps and $p$ is a literal in the postcondition of $a'$ and in the precondition of $a$. Causal links record the reason for adding step $a'$ to the plan (in order to establish precondition of $a$), and are used to generate ordering constraints. A step $a''$ \emph{threatens} a causal link $(a', p, a)$ if $a''$ makes $p$ false.  To resolve the threat  $a''$ should be placed either before $a'$ in the order, or after $a$. An ordering is \emph{consistent} if it is transitive and does not contain cycles, \ie there is no $a_i, a_j$ such that $(a_i \prec a_j), (a_j \prec a_i) \in ord$.
Given a planning action $a$, $\mathit{pre}(a)$ is the set containing the positive and negative literals of the propositional symbols appearing in the sets $ \mathit{pre}^+$ and $\mathit{pre}^-$ of $a$, and $\mathit{eff}(a)$ is the set of  positive and negative literals in $ \mathit{eff}^+$ and $\mathit{eff}^-$. 
A precondition $p$ of a step $a$ is termed \emph{open} if there is no causal link $(a', p, a) \in links(\POP)$ establishing $p$. A plan is \emph{complete} if it has no open preconditions.
Initially, the set of plans is empty, and $\POP$ is initialised to a plan consisting of two steps: \emph{start} and \emph{finish}: \emph{start} has no preconditions and the initial state $\S_I$ as a postcondition; \emph{finish} has no postconditions and  the goal $\G$ as a precondition. $ord$ contains the single ordering constraint $\{ start \prec \mathit{finish} \}$, and $links$ is empty.

\begin{algorithm}[t]
\caption{Compute the set of all partial-order plans}
\label{alg:max-pop}
\begin{algorithmic}[1]
\State $\POP \gets \langle \{\mathit{start}, \mathit{finish}\}, \{\mathit{start} \prec \mathit{finish} \}\rangle$
\State $\MaxPOP \gets \emptyset$

\Procedure{pop-plan}{$\POP$}
\State $\mathit{open} \gets$ open preconditions $\in \mathit{steps}(\POP)$
\If{$\mathit{open} = \emptyset$}
\State $\MaxPOP \gets \MaxPOP \cup \{ \langle \mathit{steps}(\POP) \setminus  \{\mathit{start}, \mathit{finish}\}, \mathit{ord}(\POP) \setminus \{\mathit{start} \prec \mathit{finish} \} \rangle \}$
\Else
\For{$a \in \mathit{steps}(\POP)$ s.t. $p \in \mathit{pre}(a) \wedge p \in \mathit{open}$}
\For{$a' \in \A \cup \mathit{steps}(\POP)$ s.t. $p \in \mathit{eff}(a')$}
\If{$a'$ is new}
\State $\POP \gets \langle \mathit{steps}(\POP) \cup \{a'\}, $
\StatexIndent[6.8] $\mathit{ord}(\POP)\ \cup\ \{\mathit{start} \prec a' \prec \mathit{finish}\} \rangle$
\EndIf
\State $\mathit{ord}(\POP) \gets \mathit{ord}(\POP) \cup \{a' \prec a \}$
\State $\mathit{links}(\POP) \gets \mathit{links}(\POP) \cup \{ (a', p, a) \}$
\State \Call{order}{$\POP, a', p, a$}
\EndFor
\EndFor
\EndIf
\EndProcedure

\smallskip
\Procedure{order}{$\POP, a', p, a$}
\State $threats \gets \{(a_i, a_j) \mid (a_i, \neg p, a_j) \in links(\POP) \}$
\If{$threats \not= \emptyset$}
\State $cons \gets \{ \{o_1, \ldots, o_n\} \mid (a_j, a_k)_i \in threats\ \wedge\ $
\StatexIndent[5] $ o_i = a \prec a_j$ or $o_i = a_k \prec a' \}$
\For{$c \in cons$}
\If{$ord(\POP) \cup  c $ is consistent}
\State $ord(\POP) \gets ord(\POP) \cup  c$  
\State \Call{pop-plan}{$\POP$}
\EndIf
\EndFor
\Else
\State \Call{pop-plan}{$\POP$}
\EndIf
\EndProcedure
\end{algorithmic}
\end{algorithm}

The procedure \textsc{pop-plan} takes a partial-order plan $\POP$ as input. If $\POP$ has no open preconditions, \ie the plan is complete, then we remove the steps \emph{start} and \emph{finish}, add it to the set of plans, and \textsc{pop-plan} returns (lines 5-6). Otherwise, we iterate over each open precondition in the set of open preconditions, $open$ (lines 8-14). For each open precondition $p$, an action $a'$ from the set of actions $\A$ of the planning domain is chosen which establishes $p$ (line 9; if there are no actions which establish $p$, \ie the plan cannot be extended to a complete plan, this branch of the computation terminates and $\POP$ is discarded). The procedure \textsc{order} is then called (line 14) to resolve any threats introduced by the addition of $a'$. If there are no threats, then \textsc{pop-plan} is called again with the updated \POP containing $a'$ (lines 23-24). Instead, if there exists at least a threat, the set of sets of ordering constraints $cons$ (line 18) contains all possible ways of safeguarding each threatened link $(a_i, \neg p, a_j)$. For each such set of ordering constraints $c$, if $c$ is consistent with the current ordering constraints in $ord(\POP)$, they are added to $ord(\POP)$, and  \textsc{pop-plan} is called to extend the plan for each possible ordering of actions (lines 19-22). 
When a plan is found, we backtrack and continue from the `closest' enclosing \textbf{for} loop (which may be iterating over sets of ordering constraints in \textsc{order}, or actions $a'$ and open preconditions $p$ in \textsc{pop-plan}) to search for alternative ways of extending the incomplete plan $\POP$.
Algorithm \ref{alg:max-pop} runs in EXPSPACE, as the set of partial order plans is in the worst case exponential in the number of actions in $\A$. In practice, this is often not an issue: the planning domain is an abstraction of the underlying MDP, and the number of actions is typically small.

For the bridge task, the algorithm would produce another partial-order plan, in which the agent builds a rope bridge using grass:
\begin{align*}\overline{\pi}&{}_{\texttt{rope-bridge}}=\\
\langle \A = &\{ \texttt{get-wood}, \texttt{get-grass}, \texttt{use-toolshed} \}, \\
\ \,\prec = &\{\texttt{get-wood} \prec \texttt{use-toolshed},\\
\ &\texttt{get-grass} \prec \texttt{use-toolshed} \} \rangle
\end{align*}

Thus giving us the set of all partial-order plans for the bridge task: $\overline{\Pi}_{\texttt{bridge}} = \{ \overline{\pi}_{\texttt{iron-bridge}}, \overline{\pi}_{\texttt{rope-bridge}} \}$. In the Appendix, we also provide all sequential plans that can be obtained by linearising the POPs in $\overline{\Pi}_{\texttt{bridge}}$.

\section{Maximally Permissive Reward Machines}
\label{sec:mprm}

In this section, we show how the set of all partial-order plans, $\MaxPOP$, for a planning task $\T = \langle \D, \S_I, \G \rangle$, can be used to synthesise a reward machine $\MaxRM$ that is \emph{maximally permissive}, \ie which allows the agent maximum flexibility in learning a policy.

Let $\Pi$ be the set of all linearisations $\SEQ$ of all the partial-order plans in $\MaxPOP$. We denote by $\mathit{pref}(\SEQ)$ the set of all proper prefixes (of arbitrary length) of $\SEQ \in \Pi$. Note that the prefixes are finite, as the set of actions in $\MaxPOP$ is finite. Then, let $\mathit{states}(\mathit{pref}(\SEQ))$ be the set of sequences of planning states that is induced by the prefixes in $\mathit{pref}(\SEQ)$, assuming that the initial planning state is $\S_I$. We denote with $\mathit{steps}(\pi)$ the set of actions in a sequential plan, and with $\mathit{post}(\A')$ the set containing $\mathit{post}(a)$, as defined in Section \ref{subsec:planning}, for each planning action $a \in \A'$. For an arbitrary sequence of planning states $u$, we denote with $\mathit{last}(u)$ the last element of the sequence. For sets of literals $P$, we, respectively, denote with $P^+$ and $P^-$ the sets of propositional symbols with positive and negative literals in $P$.

\begin{construction}[Maximally Permissive Reward Machine (MPRM)]\label{constr:rm-from-mpp} \emph{Fix the set of all
partial-order plans $\MaxPOP = \{ \POP_1, \dots, \POP_n \}$ for some planning task $\T = \langle \D, \S_I, \G \rangle$. Then, $\MaxRM$, the maximally permissive RM  corresponding to \MaxPOP, is defined as follows:}
    \begin{itemize}
        \item $U = \left[ \bigcup_{\SEQ \in \Pi} \mathit{states}(\mathit{pref}(\SEQ)) \right] \cup \{ u_g \}$; 

        \item $u_0 = [\S_I]$;

        \item $\Sigma = \bigcup_{\SEQ \in \Pi} \ \mathit{post}(\mathit{steps}(\SEQ))$;

        \item $\delta_u(u, P) = u \S $, where $\S = \left(\mathit{last}(u) \setminus P^-\right) \cup P^+$ and $u\S \in \mathit{states}(\mathit{pref}(\SEQ))$ for some linearisation $\SEQ \in \Pi$, or $= u_g$ if $\G^+ \subseteq \S$ and $\G^- \cap \S = \emptyset$;

        \item $\delta_r(u, u') = \begin{cases}
            0 & \text{\emph{if} } u' = u_g \\
            -1 & \text{\emph{otherwise}}.
        \end{cases}$
    \end{itemize}
\end{construction}

In the construction of the MPRM, the set of states correspond to the set of all possible prefixes of planning states across all POPs in the set used to build the reward machine. Then, the RM transitions from a state $u$ to a state $u' = u\S$ when it observes the set of propositional symbols $P$ which are exactly the conditions such that $\S = \left(\mathit{last}(u) \setminus P^-\right) \cup P^+$ and, most importantly, $u\S$ is a prefix of some linearisation of a POP in $\MaxPOP$. As soon as the RM ``reaches'' a sequence of states such that the last state is a goal state for the task (meaning also that a linearisation has been ``completed''), it gives a reward of 0 to the agent and terminates in state $u_g$, while for all other transitions the agent gets a reward of $-1$. Note that if $u\S$ is not the prefix of any linearisation of a POP in $\MaxPOP$, or $\S$ is not a goal state, then $\delta_u(u, P) = u$.
Figure \ref{fig:MPRM-bridge} shows the MPRM synthesised from the set $\overline{\Pi}_{\texttt{bridge}}$ of all partial-order plans for the bridge example we gave in Section \ref{subsec:planning}.

\begin{figure}
    \centering
    \includegraphics[width=.45\textwidth]{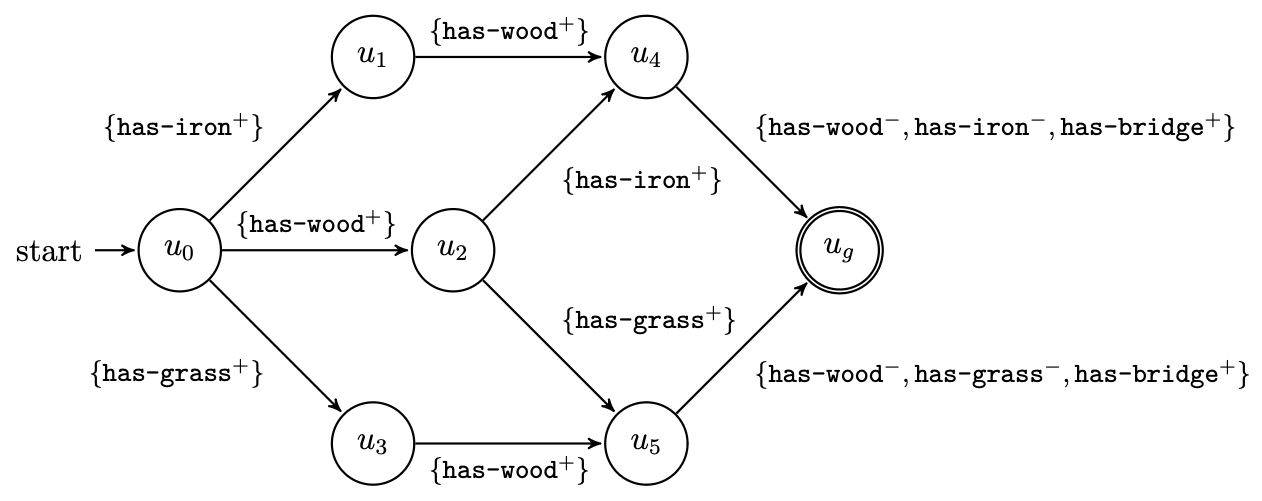}
    \caption{MPRM for the bridge task. Positive and negative postconditions are respectively denoted with a superscript $+$ and $-$.\vspace{.5cm}}
    \label{fig:MPRM-bridge}
\end{figure}

In the remainder of this section, we provide a theoretical analysis linking the optimal policies that can be learned by an agent depending on the kind of reward machine it is equipped with. We consider RMs that can be built from the set of all partial-order plans ($\text{RM-}\MaxPOP$), a single partial-order plan ($\text{RM-}\POP$), and a single sequential plan ($\text{RM-}\SEQ$), over the same planning domain $\D$. All RMs issue a non-negative reward only in the final state. We denote the optimal policy learnt using the set of all partial-order plans
by $\policy^*_{\text{RM-}\MaxPOP}$, using a single partial-order plan by $\policy^*_{\text{RM-}\POP}$, and using a single sequential plan by $\policy^*_{\text{RM-}\SEQ}$.

\begin{theorem}
\label{thm:theoretical-comparison-RMs}
  Let $\M$ be a labelled MDP, $\D$ a planning domain over $\M$, and $\text{RM-}\MaxPOP$, $\text{RM-}\POP$ and $\text{RM-}\SEQ$ final state reward machines generated from $\D$ for the same task. Then,
    \[
        \policy^*_{\text{RM-}\MaxPOP} \geq \policy^*_{\text{RM-}\POP} \geq \policy^*_{\text{RM-}\SEQ}
    \]
    where 
    $\policy_1 \geq \policy_2$ if and only if $v(\policy_1(s)) \geq v(\policy_2(s))$ for all states $s \in S$ of $\M$.
\end{theorem}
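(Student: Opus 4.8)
The plan is to reduce the statement to a comparison of the optimal value functions on the three MDPRMs and to exploit that their sets of reward‑triggering plans are nested. First I would record the structural inclusions: since $\SEQ$ is by definition a linearisation of $\POP$, and $\POP$ is one of the plans in $\MaxPOP$, the single plan accepted by $\text{RM-}\SEQ$ (namely $\SEQ$) lies among the linearisations of $\POP$ accepted by $\text{RM-}\POP$, which in turn lie in $\Pi$, the set accepted by $\text{RM-}\MaxPOP$. Hence it suffices to prove one monotonicity claim: if a final‑state RM over $\D$ accepts a subset of the linearisations of another (with the same $-1/0$ reward scheme), then its optimal value is pointwise no larger. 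Applying this claim twice yields the chain of inequalities.

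Second, I would make the reward structure explicit. Because every transition pays $-1$ until the absorbing goal state $u_g$ is entered (after which the reward is $0$), the return of a trajectory $\xi$ equals $-\tfrac{1-\gamma^{\,t(\xi)}}{1-\gamma}$, where $t(\xi)$ is the step at which $u_g$ is first reached (and $t(\xi)=\infty$ when it is never reached). For a fixed trajectory the return is thus decreasing in the hitting time $t(\xi)$, and $v_{\policy}(s)$ is determined by the law of $t$ under $\policy$. I would also invoke the standard fact that the optimal value of an MDPRM equals the supremum, over all history‑dependent policies of the underlying labelled MDP $\M$, of the expected return; this lets me compare the two machines on the \emph{same} space of base‑MDP policies, differing only in the induced reward.

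The core is a lifting step. Let $\policy^-$ be optimal for the less permissive machine. I would define a history‑dependent base‑MDP policy $\policy^+$ that internally re‑simulates the less permissive machine's state from the observed label history and plays exactly what $\policy^-$ prescribes; by construction $\policy^+$ induces the same distribution over base‑MDP trajectories as $\policy^-$. The key point is that along the trajectories generated by $\policy^-$ the two machines reach $u_g$ at the same step: an optimal policy for the less permissive machine only ever realises the high‑level actions of an accepted plan (any off‑plan planning action is pure $-1$ cost with no progress towards its $u_g$, hence suboptimal), so the only effect‑labels emitted lie on that plan; since that plan is also accepted by the more permissive machine, its tracked planning‑state sequence stays synchronised and it enters $u_g$ on the very same step. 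Therefore $\policy^+$ earns the same return trajectory‑by‑trajectory, its value under the more permissive machine equals the less permissive optimum, and the more permissive optimum is at least as large.

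The main obstacle is exactly this synchronisation argument, because the more permissive machine is \emph{eager}: on a raw label sequence it can advance its tracked planning state along a \emph{different} partial‑order plan and then become stuck, entering $u_g$ strictly later than the less permissive machine would on the same sequence (a short two‑plan example makes this concrete, so pointwise hitting‑time domination over arbitrary trajectories is false). What rescues the proof is that such sequences are never produced by an optimal less permissive policy, so the delicate part is to argue rigorously that an optimal $\policy^-$ emits only on‑plan effect‑labels -- equivalently, that the labelling $L$ does not force spurious planning effects during the navigation realising a high‑level action. I expect this to be the step requiring the most care, and the one where a mild cleanliness condition on $L$ (a weak form of the adequacy notion introduced later) may be needed to exclude the eager‑divergence pathology.
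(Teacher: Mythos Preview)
Your approach is essentially the paper's: exploit the nesting $\{\SEQ\}\subseteq\text{linearisations}(\POP)\subseteq\Pi$ and argue that any policy achievable under the less permissive machine is also achievable under the more permissive one, so the optimum can only improve. The paper's own proof, however, is a two-sentence sketch that simply asserts this lifting works (``any policy that can be learned using an RM synthesised from a single sequential plan can also be learned using an RM synthesised from a partial-order plan which has the sequential plan as its linearisation''), with no hitting-time formulation, no explicit simulation construction, and no discussion of why the two machines stay synchronised along optimal trajectories.

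In particular, the eager-divergence obstacle you isolate---that the more permissive machine, on the \emph{same} label sequence, might commit to a different branch of $\MaxPOP$ and thereby reach $u_g$ strictly later---is a genuine technical point that the paper does not address at all. Your instinct that closing this gap may require a mild assumption on $L$ (ruling out spurious planning-effect labels during low-level navigation) is reasonable; the paper simply does not engage with it. So you have not missed an idea from the paper; rather, you have surfaced a subtlety the paper's sketch elides, and your proposed resolution (restrict attention to trajectories generated by an optimal less-permissive policy, where only on-plan effect labels appear) is the natural way to make the paper's informal argument rigorous.
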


\begin{proof}
    The proof follows from the fact that any policy that can be learned using an RM synthesised from a single sequential plan can also be learned using an RM synthesised from a partial-order plan which has the sequential plan as its linearisation (if it remains an optimal policy). Similarly, a policy learned using an RM synthesised from a single partial-order plan can also be learned using an RM that is synthesised from the set of all partial-order plans.  
\end{proof}

Theorem \ref{thm:theoretical-comparison-RMs} shows that MPRMs allow an agent to learn an optimal policy with respect to the planning domain and the planning task. A natural question to ask is whether it learns a \emph{goal-optimal} policy $\policy^*$, \ie a policy that achieves the goal using the smallest number of actions in the underlying MDP. For example, an agent using Q-learning is guaranteed to learn a goal-optimal policy on an MDP where the agent is always given the same negative reward and the discount factor $\gamma$ is exactly 1. 

An agent using an MPRM will learn a goal-optimal policy if the planning domain and labelling are ``\emph{adequate}'' for the goal.
    
\begin{definition}\label{def:goal-adequate}
Given a labelled MDP, planning domain $\D$, and goal $\G$, we say that $\D$ is \emph{adequate} for $\G$ if, and only if:
\begin{itemize}
\item $\G$ corresponds to a set of planning domain fluents, \ie $\G \subseteq \F$; 
\item a goal-optimal policy encounters all the state labels in some plan $\POP \in \MaxPOP$ for $\G$, in the
order consistent with the order in $\POP$. 
\end{itemize}
\end{definition}

For example, if any policy to build a bridge has to encounter labels corresponding to getting wood, getting iron and using a factory, a planning domain and labelling containing only these fluents is adequate for the goal of having a bridge. However, if there is an alternative way of building a bridge that involves getting grass, and this label is missing in the planning domain, then the domain is not adequate for the goal of building a bridge.

\begin{theorem}\label{thm:optimal}
$\policy^* = \policy^*_{\text{RM-}\MaxPOP}$ if ${\text{RM-}\MaxPOP}$ is synthesized from a goal-adequate planning domain.
\end{theorem}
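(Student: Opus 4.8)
The plan is to reduce the claim to a shortest-path argument in the product MDPRM and then use the two clauses of goal-adequacy (Definition~\ref{def:goal-adequate}) to show that a shortest path to the reward-machine goal state $u_g$ coincides with a shortest path to a goal state of $\M$. This complements Theorem~\ref{thm:theoretical-comparison-RMs}, which only bounds $\policy^*_{\text{RM-}\MaxPOP}$ from below; here I would pin down its absolute value.

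First I would exploit the MPRM reward function from Construction~\ref{constr:rm-from-mpp}: every transition yields $-1$ except the transition into the absorbing state $u_g$, which yields $0$. Under the regime discussed above (a constant per-step penalty with $\gamma=1$, for which Q-learning targets a goal-optimal policy), the value of any policy in the MDPRM is minus the expected number of environment steps needed to drive the reward machine into $u_g$. Hence maximising $v$ over product policies is exactly minimising the expected number of steps until $u_g$ is entered, and it suffices to compare this quantity for $\policy^*$ against the optimal product policy.

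Next I would prove the upper bound $\policy^*_{\text{RM-}\MaxPOP} \le \policy^*$ in value. By Construction~\ref{constr:rm-from-mpp}, $\delta_u$ enters $u_g$ only from a state $u$ whose tracked planning state $\S = (\mathit{last}(u)\setminus P^-)\cup P^+$ satisfies $\G^+\subseteq\S$ and $\G^-\cap\S=\emptyset$. Using the first adequacy clause, $\G\subseteq\F$, together with the fact that the machine builds $\S$ from the observed postcondition labels $L(s')$, I would argue that the run enters $u_g$ only at an environment step at which $\M$ has genuinely reached a goal state. Consequently every product policy needs at least as many steps to reach $u_g$ as a goal-optimal policy needs to reach a goal state of $\M$, so no product policy can exceed the value of $\policy^*$. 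For achievability, $\policy^* \le \policy^*_{\text{RM-}\MaxPOP}$, I would use the second adequacy clause: since $\policy^*$ is goal-optimal, the relevant sub-sequence of its induced label trace encounters the labels of some $\POP\in\MaxPOP$ in an order consistent with $\prec$, \ie it is a linearisation $\SEQ\in\Pi$ of $\POP$. Viewing $\policy^*$ as a product policy that ignores the reward-machine component, I would then show the machine advances monotonically through the prefix-states $\mathit{states}(\mathit{pref}(\SEQ))$, remaining on self-loops for labels that leave the tracked state unchanged, and enters $u_g$ exactly at the step where $\policy^*$ first satisfies $\G$. Thus $\policy^*$ attains the optimal product value, and combining the two bounds yields $v(\policy^*(s)) = v(\policy^*_{\text{RM-}\MaxPOP}(s))$ for every $s\in S$.

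The main obstacle is the bookkeeping linking arbitrary environment trajectories to reward-machine runs: because $\delta_u$ self-loops on irrelevant label changes and advances only on labels matching action postconditions, I must argue carefully that (i) no run can reach $u_g$ strictly before $\M$ reaches a goal state, which requires that the \emph{simulated} planning state faithfully reflects the true goal status of $L(s')$, and (ii) the goal-optimal trajectory cannot get ``stuck'' in a non-goal prefix-state and thereby fail to reach $u_g$. Both hinge on the two adequacy clauses being exactly the conditions that force the label trace of an optimal run to coincide with a linearisation of some plan in $\MaxPOP$; formalising this faithful correspondence is the crux of the argument.
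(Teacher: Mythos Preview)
Your proposal is correct and considerably more thorough than what the paper actually does. The paper's entire proof is a single sentence: ``From the definition of a planning domain adequate for the goal, any goal-optimal policy has to go through the way-points encoded in the reward machine.'' That is, the paper treats the theorem as essentially a restatement of Definition~\ref{def:goal-adequate} and does not separately argue the two inequalities, set up the shortest-path reduction, or discuss the correspondence between reward-machine runs and environment trajectories.

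Your two-direction decomposition (the MDPRM-optimal policy cannot beat the goal-optimal one because entering $u_g$ entails achieving $\G$; the goal-optimal policy already realises the MDPRM optimum because adequacy guarantees its label trace is a linearisation of some $\POP\in\MaxPOP$) is a genuine unpacking of the same idea, and the obstacles you flag---faithfulness of the simulated planning state and the risk of the machine getting stuck on self-loops---are precisely the details the paper's one-liner elides. What your approach buys is an actual argument; what the paper's buys is brevity, at the cost of leaving the bookkeeping implicit. Note in particular that the paper's sentence addresses only the direction you call achievability ($\policy^* \le \policy^*_{\text{RM-}\MaxPOP}$) and does not even gesture at why the MDPRM-optimal cannot exceed $\policy^*$; your upper-bound paragraph supplies that missing half.
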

\begin{proof}
From the definition of a planning domain adequate for the goal, any goal-optimal policy has to go through the way-points encoded in the reward machine.
\end{proof}
\section{Empirical Evaluation}\label{sec:experiments}

In this section, we evaluate maximally permissive reward machines in three tasks in the \textsc{CraftWorld} environment, and show that the agent obtains higher reward with an MPRM than with RMs based on a single partial-order plan or a single sequential plan. In the first task, the agent has to build a bridge, as in the example in Section  \ref{subsec:planning} For the second task, the agent has to collect gold. In the third  task, the agent has to collect gold or a gem, and the task is considered achieved when the agent collects at least one of the two items. For the gold-or-gem task we have to slightly modify the definition of goal states in planning tasks: the goal is the pair $\G = \langle \G^+ = \{ \texttt{has-gold}, \texttt{has-gem} \}, \G^- = \emptyset \rangle$, and a planning state $\S$ is a \emph{goal} state if and only if $\G^+ \cap \S \neq \emptyset$ and $\G^- \cap \S = \emptyset$. The gold and the gem are collected as described in \cite{Andreas//:17}: gold is collected by using a(ny) bridge, whereas the gem is collected using an axe. To produce an axe, the agent must combine a stick, which can be obtained by processing wood at the workbench, with iron at the toolshed. We refer to these, respectively, as the ``\emph{bridge task}'', ``\emph{gold task}'', and ``\emph{gold-or-gem task}''. In the planning domain, we add the following fluents: 
\begin{itemize}
    \item For the gold task: \texttt{has-gold};
    \item For the gold-or-gem task: \texttt{has-gold}, \texttt{has-stick}, \texttt{has-axe}, \texttt{has-gem};
\end{itemize}
and the following planning actions:
    \begin{itemize}
    \item For the gold task: 
        \begin{itemize}
            \item \texttt{get-gold}, with one positive precondition, \texttt{has-bridge}, and one positive postcondition, \texttt{has-gold};
        \end{itemize}
    \item For the gold-or-gem task: 
        \begin{itemize}
            \item \texttt{use-workbench}, with one positive precondition, \texttt{has-wood}, one positive postcondition, \texttt{has-stick}, and one negative postcondition, \texttt{has-wood};
            \item \texttt{use-toolshed-for-axe}, with positive preconditions, \texttt{has-stick} and \texttt{has-iron}, one positive postcondition, \texttt{has-axe}, and two negative postcondition, \texttt{has-stick} and \texttt{has-iron};
            \item \texttt{get-gem}, with one positive precondition, \texttt{has-axe}, and one positive postcondition, \texttt{has-gem};
        \end{itemize}
\end{itemize}

The set of partial-order plans for the bridge task $\overline{\Pi}_{\texttt{bridge}} = \{ \overline{\pi}_{\texttt{iron-bridge}}, \overline{\pi}_{\texttt{rope-bridge}} \}$ is given in Section \ref{subsec:planning}. For the gold task, we extend $\overline{\pi}_{\texttt{iron-bridge}}$ and $\overline{\pi}_{\texttt{rope-bridge}}$ by adding the \texttt{get-gold} action, and by having $\texttt{use-factory} \prec \texttt{get-gold}$ and $\texttt{use-toolshed} \prec \texttt{get-gold}$. For the gold-or-gem task, the set of partial-order plans consists of  $\overline{\pi}_{\texttt{iron-bridge}}$, $\overline{\pi}_{\texttt{rope-bridge}}$ and $\overline{\pi}_{\texttt{gem}}$ in which the agent makes an axe and uses it to mine the gem. $\overline{\pi}_{\texttt{gem}}$ is defined as follows:
\begin{align*}
\overline{\pi}_{\texttt{gem}} &=\\ 
    \langle \A = \{ &\texttt{get-wood}, \texttt{get-iron}, \texttt{use-workbench},\\ 
            &\texttt{use-toolshed-for-axe}, \texttt{get-gem} \},\\
    \prec = \{  &\texttt{get-wood} \prec \texttt{use-workbench}, \\
                &\texttt{get-iron} \prec \texttt{use-toolshed-for-axe}, \\
                &\texttt{use-workbench} \prec \\&\qquad\texttt{use-toolshed-for-axe}, \\
                &\texttt{use-toolshed-for-axe} \prec \texttt{get-gem}\}\rangle
\end{align*}

For each task, we also generate all sequential plans that can be obtained by linearising the POPs that can be used to achieve the task. 
Thus, for both the bridge and gold tasks, there are a total of 4 sequential plans and 2 partial-order plans. For the gold-or-gem task, there are a total of 7 sequential plans and 3 partial-order plans. In the Appendix\footnote{The extended version of this paper with the appendix can be found on arXiv.}, we provide formal definitions of the planning domains, and give also the plans for the gold and gold-or-gem tasks.

\subsection{Experimental Setup}

The maximally permissive RMs for each task were synthesised using the construction given in Section \ref{sec:mprm}. The RMs for each partial-order and sequential plan were generated using the approach presented in \cite{Illanes//:19}. Training is carried out by using Q-learning over the resulting MDPRMs \cite{Toro-Icarte//:22}.\footnote{Note that we do not provide results for a baseline that does not employ reward machines (\eg Q-learning): as shown in \cite{Toro-Icarte//:22}, \textsc{CraftWorld} is a complex environment with sparse rewards, making it infeasible for an agent to learn an effective policy without having access to a reward machine.} 

For each task we generated 10 different maps of size 41 by 41 cells. The maps and initial locations were chosen so that from some locations a task can be completed more quickly by following a particular sequential plan. For example, in the first map for the bridge task, if the agent starts from a location in the upper half of the map (\ie in the first 20 rows) it is more convenient to build an iron bridge, while in the lower half of the map it is more convenient to build a rope bridge. The MDP reward function $r$ returns $-$1 for each step taken by the agent, until it achieves the task or the episode terminates. When the task is completed, the map and agent are ``re-initialised'':  the agent is placed on a random starting cell and its ``inventory'' is emptied, \ie it contains no items. For each set of plans and single partial-order/sequential plan for a task, and for each of the 10 maps for the task, an agent was trained with the corresponding RM for 10,000,000 training steps.
Training was carried out in episodes, lasting at most 1,000 steps, after which the environment was re-initialised regardless of whether the agent has achieved the task or not. Every 10,000 training steps the agent was evaluated on the same map used for training from 5 (predetermined) starting positions. We set the learning rate $\alpha =0.95$, the discount rate $\gamma = 1$, and the exploration rate $\varepsilon = 0.1$. 

Our implementation, largely based off of that of \cite{Illanes//:19}, is available in the following \href{https://github.com/giovannivarr/MPRM-ECAI24}{GitHub repository}.

\subsection{Results}

For each approach, we plot the median and the 25$^{\text{th}}$ and the 75$^{\text{th}}$ percentiles (shaded areas) of the rewards obtained across all maps by the agents in the evaluations during training for each task. To make the plots more readable, we have ``aggregated'' results for the sequential and partial-order plan-based RMs: for each kind of plan we plot the median and the 25$^{\text{th}}$ and the 75$^{\text{th}}$ percentiles of all agents trained with an RM generated using that type of plan. In the plots, ``\texttt{QRM-MPRM}'' denotes the performance of the agent trained with a maximally permissive RM, while ``\texttt{Aggregated-QRM-Seq}'' and ``\texttt{Aggregated-QRM-POP}'' are, respectively, the aggregated performance of agents trained with sequential plan RMs and partial-order plan RMs. On the $x$-axis we plot the number of steps (in millions), while on the $y$-axis we plot the performance obtained during the evaluations run at the corresponding timestep.

\begin{figure}[ht]
    \begin{center}
    \centerline{\includegraphics[width=.45\textwidth]{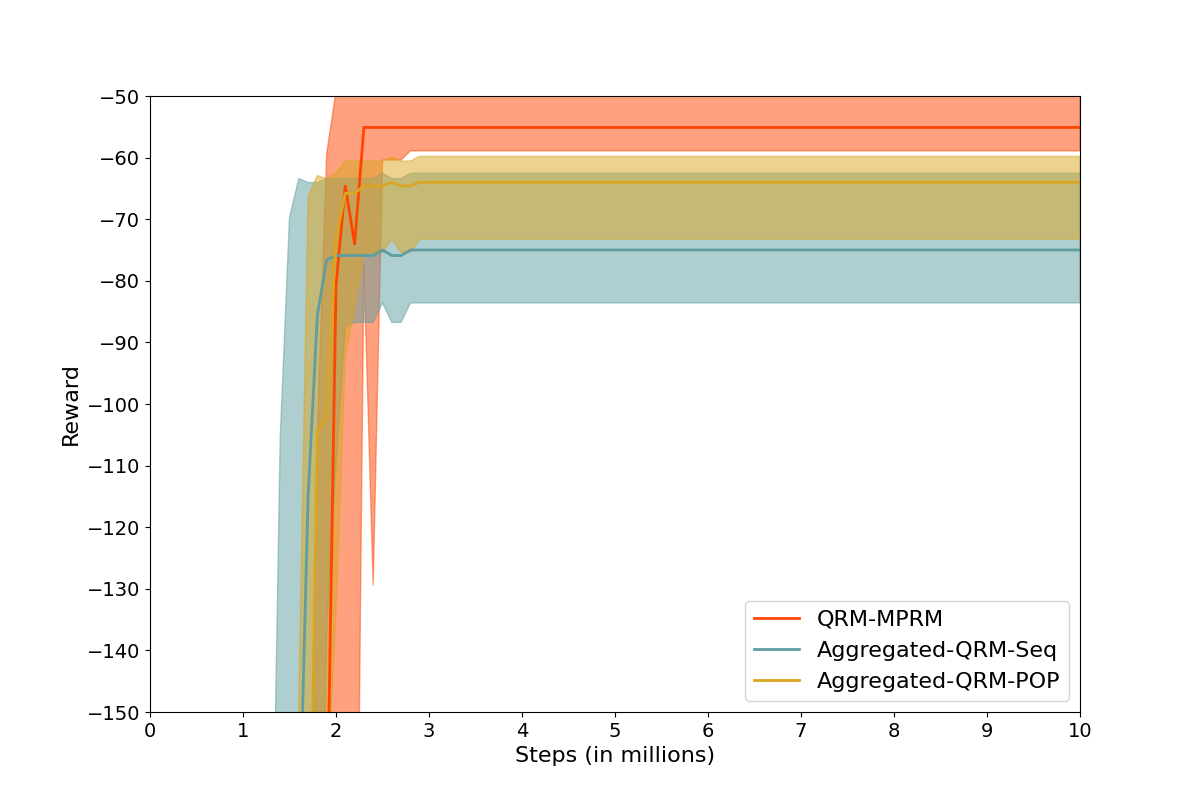}}
    \caption{Results for the bridge task.}
    \label{fig:bridge-qrm}
    \end{center}
\end{figure}

\begin{figure}[ht]
    \begin{center}
    \centerline{\includegraphics[width=.45\textwidth]{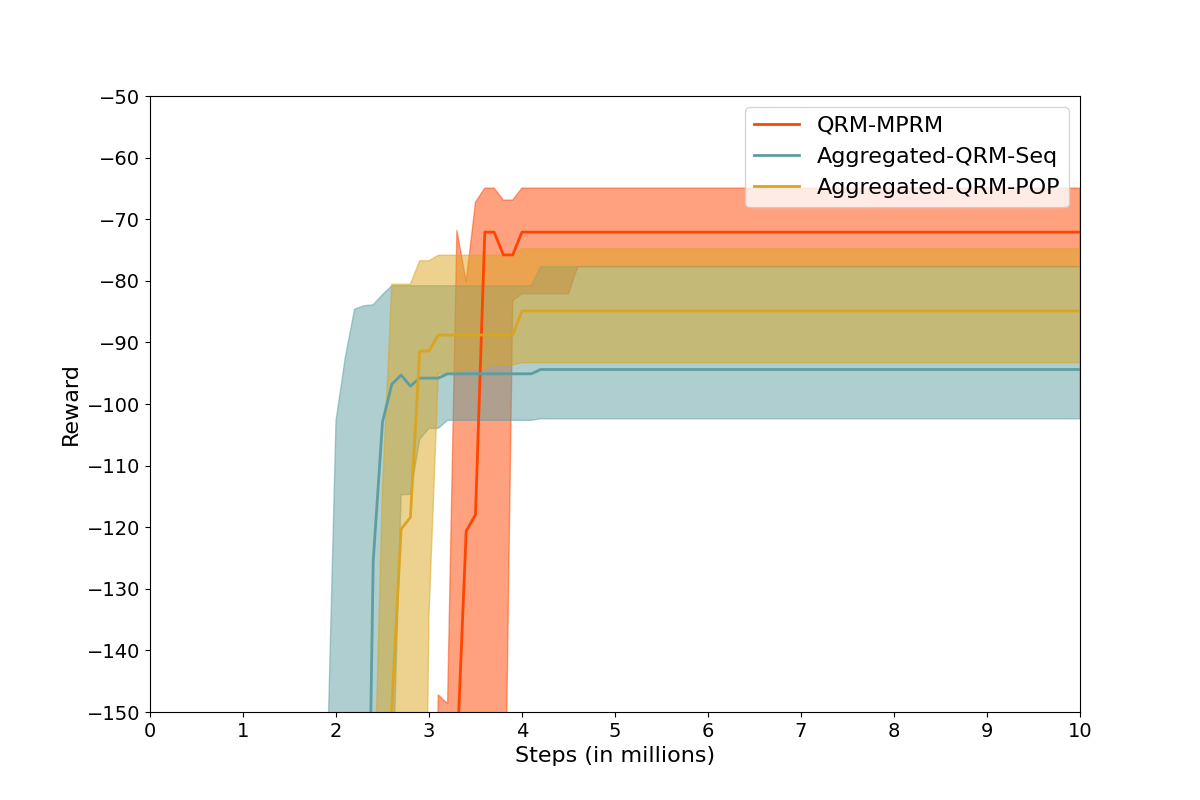}}
    \caption{Results for the gold task.}
    \label{fig:gold-qrm}
    \end{center}
\end{figure}

\begin{figure}[ht]
    \begin{center}
    \centerline{\includegraphics[width=.45\textwidth]{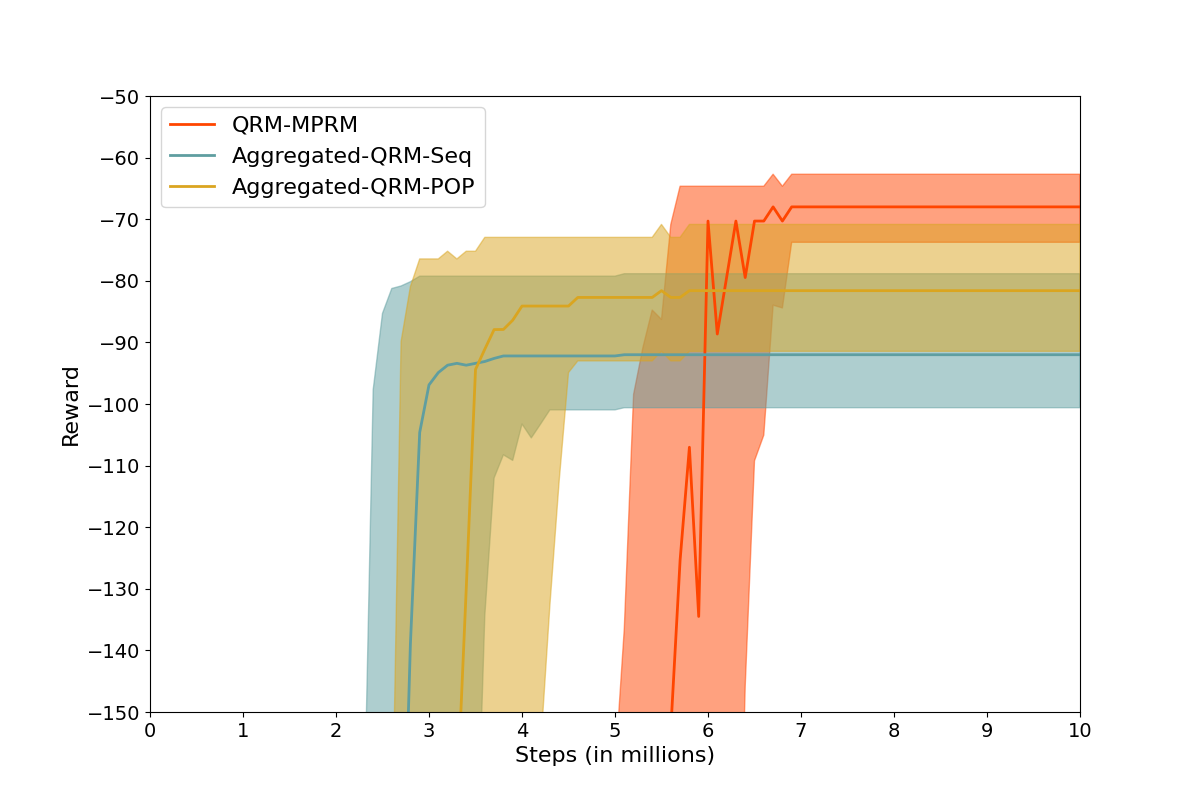}}
    \caption{Results for the gold-or-gem task.}
    \label{fig:gold-gem-qrm}
    \end{center}
\end{figure}

\begin{figure*}[t]
    \centering
    \includegraphics[width=0.7\textwidth, height=0.42\textheight]{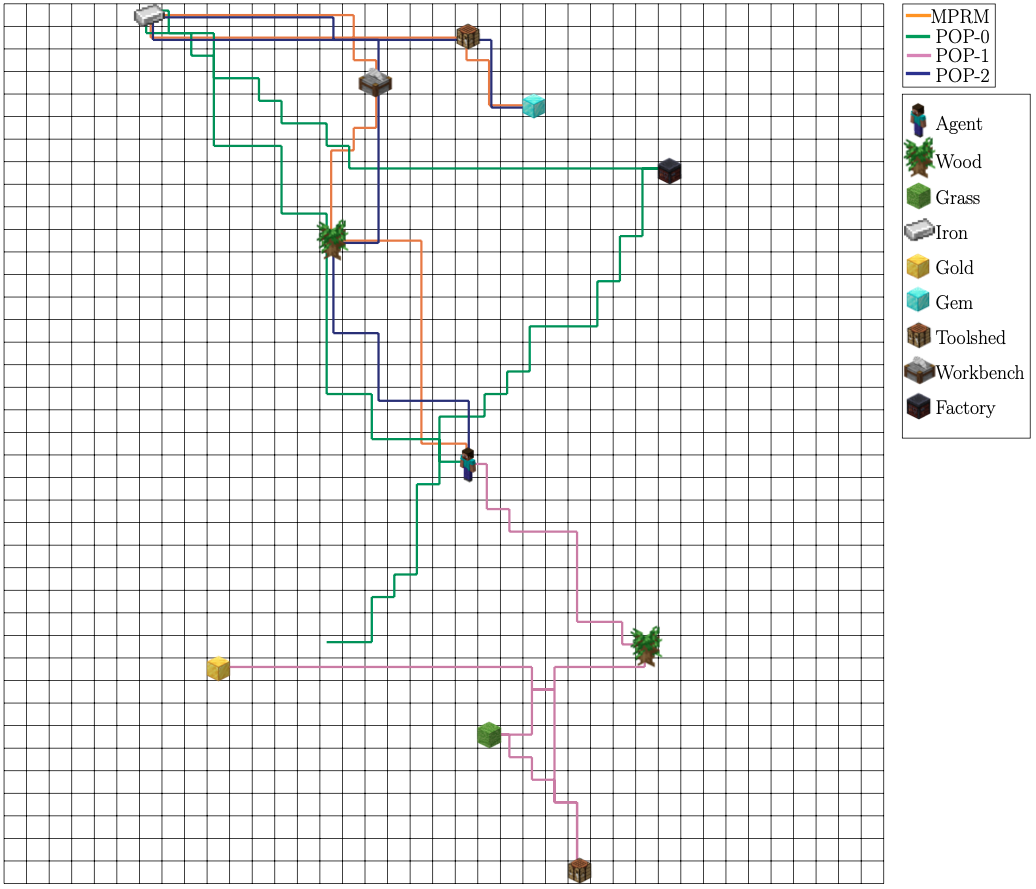}
    \caption{Illustration of the behaviour of the MPRM and POP-trained agents on the gold-or-gem task.}
    \label{fig:demo}
    \bigskip
\end{figure*}

Figure \ref{fig:bridge-qrm} shows the results for the bridge task, Figure \ref{fig:gold-qrm} shows the results for the gold task, and Figure \ref{fig:gold-gem-qrm} shows the results for the gold-or-gem task. As can be seen, in all tasks the agent trained with the MPRM outperforms the aggregated results for the agents trained using RMs based on a single partial-order or sequential plan. This is as expected given Theorem \ref{thm:theoretical-comparison-RMs}. In addition, the agent trained using an RM based on a single partial-order plan outperforms the agent trained using an RM based on a single sequential plan. Again, this is consistent with Theorem \ref{thm:theoretical-comparison-RMs} and the results in \cite{Illanes//:19}. However, in all experiments, the agent trained with the maximally permissive RM converges more slowly than the other agents (particularly in the gold-or-gem task). Intuitively, increasing the flexibility of the RM trades solution quality for sample complexity. Note that, as the planning domain used is \emph{adequate} in the sense defined in Section \ref{sec:mprm} for all the tasks, the MPRM agent can learn an optimal policy for each task.

Figure \ref{fig:demo} illustrates the behaviour of the agent trained using the MPRM and the agents trained using the RMs generated from each of the partial-order plans on a map for the gold-or-gem task. For readability, we have decided to not include agents trained with a sequential plan-based RM: note that none of them achieved the task in fewer steps than the agents shown in the figure. In the supplementary material we provide, for each agent, a file containing its trajectory (\ie the ordered sequence of coordinates of the cells it visited) in the test, also for agents trained with a sequential plan-based RM. Given the initial position of the agent, the optimal plan for the task is to collect a gem (POP-2). As can be seen, both the agent trained using the POP-2-based RM and the MPRM agent achieve the goal by collecting a gem and complete the task in 60 steps. The agent trained using the RM based on the POP to collect gold using a rope bridge (POP-1) is also able to achieve the task, but in 63 steps. However, the agent trained using the RM based on the POP to collect gold using an iron bridge (POP-0) is unable to complete the task after 10,000,000 timesteps. This illustrates that the MPRM agent is able to choose the ``correct'' plan for the agent's position, and the inherent problems of training agents using an RM based on a single plan.
\section{Related Work}

Reward machines have been used both in single-agent RL \cite{Toro-Icarte//:18,Dohmen//:22,Corazza//:22,Furelos//:23} and multi-agent RL \cite{Neary//:21,Hu//:24}. As mentioned in the Introduction, approaches to synthesise reward machines from high-level specifications have also been proposed; however, to the best of our knowledge, only \cite{Illanes//:19,Illanes//:20} and our work generate reward machines from plans.

Another line of research focuses on learning reward machines from experience. In \cite{Toro-Icarte//:19} an approach is proposed that uses Tabu search to update the RM hypothesis, by searching through the trace data generated by the agent exploring the environment. \cite{Xu//:20} presented an approach where the RM hypothesis is updated whenever traces that are inconsistent with the current one are detected. In \cite{Hasanbeig//:21}  RM-learning is reduced to SAT solving, and solved using DPLL. While these approaches do not require an abstract model of the environment in the form of a planning domain, they focus on learning a reward machine for a single task. In \cite{Toro-Icarte//:19} and \cite{Xu//:20} the agent learns a policy for each state of the RM hypothesis. However, when the latter is updated, it has to re-learn such policies from scratch (\cite{Xu//:20} tries to mitigate this issue by transferring a subset of the policies, but this is not always possible). 
Moreover, all these approaches assume that the agent is able to generate ``positive'' traces, \ie traces in which the task is achieved. While in simple environments this is a reasonable assumption, for more complex environments with sparse rewards it may be difficult to generate positive traces. 

Planning has been applied to reinforcement learning since at least \cite{Grounds//:05}, which combined Q-learning with STRIPS planning. More recently, \cite{Yang//:18} proposed an approach integrating planning with options \cite{Sutton//:99,Dietterich//:00}. In \cite{Lyu//:19} a framework is introduced that exploits planning to improve sample efficiency in deep RL.
In both of these approaches the RL experience is then used to improve the planning domain, similarly to what happens in model-based RL. Then, the new plan obtained using the updated domain is used to train again the RL agent. 
In \cite{Schrittwieser//:20,Jin//:22,Wu//:22} abstract models for the low-level MDP and/or its actions are learned so that planning can be leveraged to improve learning. However, all of these approaches assume that learning is guided by a single (sequential) plan.

\section{Conclusions}

We have proposed a new planning-based approach to synthesising maximally permissive reward machines which uses the set of partial-order plans for a goal rather than a single sequential or partial-order plan as in previous work.  Planning-based approaches have the advantage that it is straightforward to train agents to achieve new tasks ---  given a planning domain, we can automatically generate a reward machine for a new task. We have provided theoretical results showing how agents trained using maximally permissive reward machines learn policies that are at least as good as those learned by agents trained with a reward machine built from an individual sequential or partial-order plan, and the expected reward of an optimal policy learned using an MPRM synthesised from a \emph{goal-adequate} planning domain is the same as that of an optimal policy for the underlying MDP. Experimental results from three different tasks in the \textsc{CraftWorld} environment suggest that these theoretical results apply in practice. However, our results also show that agents trained with maximally permissive RMs converge more slowly than agents trained using RMs based on a single plan. We believe this is because the increased flexibility of maximally permissive RMs trades solution quality for sample complexity. Our approach is therefore most useful when the quality of the resulting policy is paramount. 

A limitation of our approach is that, in the worst case, the set of all partial order plans for a task may be exponential in the number of actions in the planning domain. In future work we would like to investigate the use of \emph{top-k} planning techniques, \eg \cite{Katz/Lee:03a}, to sample a diverse subset of the set of all plans. Intuitively, such an approach could allow the quality of the resulting policy to be traded off against the number of plans in the sample.

Another line of future work is to investigate option-based approaches to learning \cite{Sutton//:99,Dietterich//:00} as in \cite{Illanes//:20}, where each abstract action in a plan is ``implemented'' as an option. We expect results similar to the ones in this paper, where the agent trained with all partial-order plans is able to achieve a better policy but converging slower.

Finally, the experiments in Section \ref{sec:experiments} are limited to discrete environments. However, our approach is applicable to environments with continuous action and state spaces. Reward machines have previously been successfully applied in such environments \cite{Toro-Icarte//:22,Furelos//:23}, and planning domains, which form the basis our approach, are agnostic about the underlying environment, as they are defined in terms of states resulting from (sequences of) MDP actions rather than the actions themselves. Nevertheless, learning in continuous environments is more challenging than learning in discrete ones, and evaluating the benefits of our approach in such environments is future work.

\bibliography{ecai24-mpprl}

\clearpage
\section*{Appendix}

\subsection*{Sequential plans for the bridge task}
    Recall that the set of POPs for the bridge task is $\overline{\Pi}_{\texttt{bridge}} = \{ \overline{\pi}_{\texttt{rope-bridge}}, \overline{\pi}_{\texttt{iron-bridge}} \}$. By linearising them, we obtain the following sequential plans:

    \begin{itemize}
        \item[] $\pi_0 = [\texttt{get-wood}, \texttt{get-iron}, \texttt{use-factory}]$
        \item[] $\pi_1 = [\texttt{get-wood}, \texttt{get-grass}, \texttt{use-toolshed}]$
        \item[] $\pi_2 = [\texttt{get-iron}, \texttt{get-wood}, \texttt{use-factory}]$
        \item[] $\pi_3 = [\texttt{get-grass}, \texttt{get-wood}, \texttt{use-toolshed}]$
    \end{itemize}

\subsection*{Planning domain and plans for the gold task}
The planning domain $\D_{\texttt{gold}} = \langle \F, \A \rangle$ is the following:

\begin{itemize}
        \item[] $\begin{aligned}\F = \{ & \texttt{has-wood}, \texttt{has-grass}, \\
                &\texttt{has-iron}, \texttt{has-bridge},\\
                & \texttt{has-gold} \}\end{aligned}$
        \item[] $\begin{aligned}\A = \{ & \texttt{get-wood}, \texttt{get-grass}, \texttt{get-iron}, \\
                & \texttt{use-factory}, \texttt{use-toolshed}, \\
                & \texttt{get-gold}\}\end{aligned}$
\end{itemize}

where each action has the following pre- and postconditions (recall that each tuple includes, in order, the set of positive and negative preconditions, and the set of positive and negative postconditions of the action):
\begin{itemize}
    \item[] $\texttt{get-wood} = \langle \emptyset, \emptyset, \{ \texttt{has-wood}\}, \emptyset \rangle $
    \item[] $\texttt{get-grass} = \langle \emptyset, \emptyset, \{ \texttt{has-grass}\}, \emptyset \rangle $
    \item[] $\texttt{get-iron} = \langle \emptyset, \emptyset, \{ \texttt{has-iron}\}, \emptyset \rangle $
    \item[] $\begin{aligned}\texttt{use}&\texttt{-factory} =\\&\langle \{ \texttt{has-wood}, \texttt{has-iron}\}, \emptyset,\\&\{ \texttt{has-bridge}\}, \{ \texttt{has-wood}, \texttt{has-iron} \} \rangle\end{aligned}$
    \item[] $\begin{aligned}\texttt{use}&\texttt{-toolshed} = \\&\langle \{ \texttt{has-wood}, \texttt{has-grass}\}, \emptyset,\\& \{ \texttt{has-bridge}\}, \{ \texttt{has-wood}, \texttt{has-grass} \} \rangle \end{aligned}$
    \item[] $\texttt{get-gold} = \langle \{\texttt{has-bridge} \}, \emptyset, \{ \texttt{has-gold}\}, \emptyset \rangle $
\end{itemize}

For the planning task we have $\S_I = \emptyset$, and $\G = \langle \{ \texttt{has-gold} \}, \emptyset \rangle$. The set of POPs $\overline{\Pi}_{\texttt{gold}}$ for this task is the following:
\begin{itemize}
    \item[] $\begin{aligned}\overline{\pi}_{\texttt{iron-gold}} &= \\
    \langle \A = \{ &\texttt{get-wood}, \texttt{get-iron}, \\
            &\texttt{use-factory}, \texttt{get-gold}\},\\
    \prec = \{  &\texttt{get-wood} \prec \texttt{use-factory}, \\
                &\texttt{get-iron} \prec \texttt{use-factory}, \\
                &\texttt{use-factory} \prec \texttt{get-gold}\}\rangle\end{aligned}$
    \item[] $\begin{aligned}\overline{\pi}_{\texttt{rope-gold}} &= \\
    \langle \A = \{ &\texttt{get-wood}, \texttt{get-grass}, \\
            &\texttt{use-toolshed}, \texttt{get-gold}\},\\
    \prec = \{  &\texttt{get-wood} \prec \texttt{use-toolshed}, \\
                &\texttt{get-grass} \prec \texttt{use-toolshed}, \\
                &\texttt{use-toolshed} \prec \texttt{get-gold}\}\rangle\end{aligned}$
\end{itemize}

By linearising the POPs in $\overline{\Pi}_{\texttt{gold}}$, we obtain the following sequential plans:
        \begin{itemize}
            \item[] $\begin{aligned}\pi_0 = &[\texttt{get-wood}, \texttt{get-iron},\\ &\texttt{use-factory}, \texttt{get-gold}]\end{aligned}$
            \item[]$\begin{aligned}\pi_1 = &[\texttt{get-wood}, \texttt{get-grass},\\ &\texttt{use-toolshed}, \texttt{get-gold}]\end{aligned}$
            \item[]$\begin{aligned}\pi_2 = &[\texttt{get-iron}, \texttt{get-wood},\\ &\texttt{use-factory}, \texttt{get-gold}]\end{aligned}$
            \item[]$\begin{aligned}\pi_3 = &[\texttt{get-grass}, \texttt{get-wood},\\ &\texttt{use-toolshed}, \texttt{get-gold}]\end{aligned}$
        \end{itemize}

\subsection*{Planning domain and plans for the gold-or-gem task}

The planning domain $\D_{\texttt{gold-or-gem}} = \langle \F, \A \rangle$ is the following:
\begin{itemize}
        \item[] $\begin{aligned}\F = \{ & \texttt{has-wood}, \texttt{has-grass}, \\
                & \texttt{has-iron}, \texttt{has-bridge}, \\
                & \texttt{has-stick}, \texttt{has-axe}, \\
                & \texttt{has-gold}, \texttt{has-gem} \}\end{aligned}$
        \item[] $\begin{aligned}\A = \{ & \texttt{get-wood}, \texttt{get-grass}, \texttt{get-iron}, \\
                & \texttt{use-factory}, \texttt{use-toolshed}, \\
                & \texttt{use-workbench}, \\
                & \texttt{use-toolshed-for-axe}, \\
                & \texttt{get-gold}, \texttt{get-gem}\}\end{aligned}$
\end{itemize}

where each (new) action is defined as follows:
\begin{itemize}
    \item[] $\begin{aligned}\texttt{use}&\texttt{-workbench} = \\&\langle \{ \texttt{has-wood} \}, \emptyset,\\& \{ \texttt{has-stick}\}, \{ \texttt{has-wood}\} \rangle \end{aligned}$
    \item[] $\begin{aligned}\texttt{use}&\texttt{-toolshed-for-axe} = \\&\langle \{ \texttt{has-stick}, \texttt{has-iron}\}, \emptyset,\\& \{ \texttt{has-axe}\}, \{ \texttt{has-stick}, \texttt{has-iron} \} \rangle \end{aligned}$
    \item[] $\texttt{get-gem} = \langle \{\texttt{has-axe} \}, \emptyset, \{ \texttt{has-gold}\}, \emptyset \rangle $
\end{itemize}

For the planning task, we have $\S_I = \emptyset$, and $\G = \langle \{ \texttt{has-gold}, \texttt{has-gem} \}, \emptyset \rangle$. As described in Section 4, the task is considered achieved as soon as the agent collects one between the gold ore and the gem. Notice that there are no planning actions that allow the agent to collect the gold ore and the gem at the same time. Then, finding partial-order plans for this task is equivalent to taking the union of the partial-order plans for the planning tasks $\T_{\texttt{gold}} = \langle \D_{\texttt{gold-or-gem}}, \S_I, \langle \{ \texttt{has-gold}, \emptyset \rangle\rangle $ and $\T_{\texttt{gem}} = \langle \D_{\texttt{gold-or-gem}}, \S_I, \langle \{ \texttt{has-gem}, \emptyset \rangle\rangle$. Thus, we have that the set of partial-order plans is $\overline{\Pi}_{\texttt{gold-or-gem}} = \{ \overline{\pi}_{\texttt{rope-gold}}, \overline{\pi}_{\texttt{iron-gold}}, \overline{\pi}_{\texttt{gem}}\}$, where the first two POPs are as they were for the gold task, and the third POP is as it was presented in Section 4. The only POP that adds new sequential plans compared to the gold task is $\overline{\pi}_{\texttt{gem}}$; by linearising it we obtain the following sequential plans:

    \begin{itemize}
        \item[] $\begin{aligned}\pi_4 = &[\texttt{get-wood}, \texttt{use-workbench},\\ &\texttt{get-iron}, \texttt{use-toolshed-for-axe},\\ &\texttt{get-gem}]\end{aligned}$
        \item[]$\begin{aligned}\pi_5 = &[\texttt{get-iron}, \texttt{get-wood},\\ &\texttt{use-workbench}, \\&\texttt{use-toolshed-for-axe},\texttt{get-gem}]\end{aligned}$
        \item[]$\begin{aligned}\pi_6 = &[\texttt{get-wood}, \texttt{get-iron},\\ &\texttt{use-workbench}, \\&\texttt{use-toolshed-for-axe},\texttt{get-gem}]\end{aligned}$
    \end{itemize}

\end{document}